\DeclareMathOperator{\Tr}{Tr}
\title{
Enforcing Template Representability and Temporal Consistency \\
for Adaptive Sparse Tracking
}
\author{Xue Yang, Fei Han, Hua Wang, and Hao Zhang
\thanks{Corresponding Author. This work was partially supported by the following grants: NSF-IIS 1423591.} \\
Department of Electrical Engineering and Computer Science\\
Colorado School of Mines, Golden, Colorado 80401\\
xueyang@mines.edu, fhan@mines.edu, huawangcs@gmail.com, hzhang@mines.edu
}
\begin{document}

\maketitle

\begin{abstract}
Sparse representation has been widely studied in visual tracking,
which has shown promising tracking performance.
Despite a lot of progress,
the visual tracking problem is still a challenging task due to appearance variations over time.
In this paper, we propose a novel sparse tracking algorithm
that well addresses temporal appearance changes, by
enforcing template representability and temporal consistency (TRAC).
By modeling temporal consistency,
our algorithm addresses the issue of drifting away from a tracking target.
By exploring the templates' long-term-short-term representability,
the proposed method adaptively updates the dictionary
 using the most descriptive templates,
which significantly improves the robustness to target appearance changes.
We compare our TRAC algorithm against the state-of-the-art approaches
on 12 challenging benchmark image sequences.
Both qualitative and quantitative results demonstrate that our algorithm significantly outperforms
previous state-of-the-art trackers.
\end{abstract}

\section{Introduction}
Visual tracking is one of the most important topics in computer vision
with a variety of applications such as surveillance, robotics, and motion analysis.
Over the years, numerous visual tracking methods have been proposed with demonstrated success \cite{yilmaz2006object,salti2012adaptive}.
However, tracking a target object under different circumstances robustly
remains a challenging task due to the challenges like occlusion, pose variation, background clutter, varying view point, illumination and scale change.
In recent years, sparse representation and particle filtering
have been widely studied to solve the visual tracking problem
\cite{mei2011robust,mei2011minimum}.
In this framework, particles are randomly sampled around the previous target state
according to Gaussian distributions,
each particle is sparsely represented by a dictionary of templates
and the particle with the smallest representation error is selected as the tracking result.
The sparse representation of each particle can be solved using $\ell_1$ minimization.
Multi-task learning improves the performance by solving all particles together as a multi-task problem
using mixed $\ell_{2,1}$ norm,
which can exploit the intrinsic relationship among all particles \cite{zhang2012robust}.
The sparse trackers have demonstrated robustness to image occlusion and lighting changes.
However, the temporal consistency of target appearances over time was not well investigated,
which is critical to track deformable/changing objects in cluttered environments.
In addition,
previous template update schemes based only on an importance weight
can result in a set of similar templates,
which limits the representability of the templates
and makes the trackers sensitive to appearance changes over time.

To make visual tracking robust to appearance changes like pose changes, rotation, and deformation,
we introduce a novel sparse tracking algorithm
that incorporates template representability and temporal consistency (TRAC).
Our contributions are threefold:
(1) We propose a novel method to model \emph{temporal consistency} of target appearances
in a short time period via sparsity-inducing norms,
which can well address the problem of tracker drifting.
(2) We introduce a novel \emph{adaptive template update} scheme
that considers the representability of the templates
beyond only using traditional important weights,
which significantly improves the templates' discriminative power.
(3) We develop a new optimization algorithm to efficiently solve the formulated problems,
with a theoretical guarantee to converge to the global optimal solution.

The reminder of the paper is organized as follows.
Related background is discussed in Section 2.
Our novel TRAC-based tracking is proposed in Section 3.
After showing experimental results in Section 4,
we conclude the paper in Section 5.

\section{Background}


\subsection{Related Work}

Visual tracking has been extensively studied over the last few decades.
Comprehensive surveys of tracking methods can be found in \cite{salti2012adaptive,smeulders2014visual}.
In general, existing tracking methods can be categorized as either discriminative or generative.
Discriminative tracking methods formulate the tracking problem as a binary classification task
that separates a target from the background.
\cite{babenko2009visual} proposed a multi instance learning algorithm that trained a discriminative classifier
in an online manner to separate the object from the background. \cite{kalal2010pn} used a bootstrapping binary classifier with positive and negative constraints for object tracking by detection. An online SVM solver was extended with latent variables in \cite{yao2013part} for structural learning of the tracking target.
Generative tracking techniques \cite{zhang2013real}, on the other hand, are based on appearance models of target objects
and search the most similar image region.
The appearance model can either rely on key points and finding correspondences on deformable objects \cite{nebehay2015clustering} or on image features extracted from a bounding box  \cite{zhang2013real}.
We focus on appearance models relying on image features,
which can be used to construct a descriptive representation of target objects.

Recently, sparse representation was introduced in generative tracking methods,
which demonstrated promising performance \cite{mei2011robust,liu2010robust,li2011real}.
In sparse trackers,
a candidate is represented by a sparse linear combination of target templates and trivial templates.
The trivial templates can handle occlusion by activating a limited number of trivial template coefficients, while the whole coefficients are sparse.
The sparse representation can be learned by solving an optimization problem regularized by sparsity-inducing norms.
Techniques using the $\ell_1$ norm regularization to build sparse representation models are often referred to as the L1 tracker.
\cite{bao2012real} improved the L1 tracker by adding an $\ell_2$ norm regularization on the trivial templates to increase tracking performance when no occlusion is present.
Considering the inherent low-rank structure of particle representations that can be learned jointly, \cite{zhang2012low} formulated the sparse representation problem as a low-rank matrix learning problem.
A multi-task learning was proposed to jointly learn the sparse representation of all particles
under this tracking framework based on particle filters \cite{zhang2012robust},
which imposed a joint sparsity using a mixed $\ell_{p,1}$ norm to encourage the sparseness of particles' representations that share only a few target templates.
Besides developing sparse representation models, many research focused on studying effective visual features
that can well distinguish the target from the background.
\cite{jia2012visual} proposed a local structural model that samples overlapped image patches within the target region to locate the target and handle partial occlusion.
 \cite{hong2013tracking}
utilized multiple types of features, including color, shape, and texture, in jointly sparse representations shared among all particles.
In \cite{zhang2015structural}, global and local features were imposed together with predefined spatial layouts considering the relationship among global and local appearance as well as the spatial structure of local patches.
Global and local sparse
representations were also developed in \cite{zhong2012robust},
using feature selection and a combination of generative and discriminative learning methods. 
However, the previous sparse trackers generally ignore the temporal
consistency of the target in a short history of frames,
which is addressed in this work.

For accurate visual tracking,
templates must be updated to account for target appearance changes and prevent drift problems.
Most of the sparse-based trackers adopted the template update scheme from the work in \cite{mei2011robust},
which assigns an importance weight for each template based on its utilization during tracking.
The template having the smallest weight is then replaced by the current tracking result.
However, this scheme cannot model the templates' representability
and cannot adapt to the degree of target's appearance changes,
thus lacks of discriminative power.
Our TRAC algorithm addresses both issues
and can robustly track targets
with appearance changes over time.




\subsection{Particle Filter}

The particle filter is widely used in visual tracking,
which is a combination of sequential importance sampling and resampling methods to solve the filtering problem.
It estimates the posterior distribution of state variables in a hidden Markov chain.
Let $\mathbf{s}_t$ and $\mathbf{y}_t$ denote the state variable at time $t$ and its observation respectively. The prediction of the state $\mathbf{s}_t$ given all previous observations up to time $t-1$ is given by
\begin{equation}
  p(\mathbf{s}_t|\mathbf{y}_{1:t-1}) = \int{p(\mathbf{s}_t|\mathbf{s}_{t-1})p(\mathbf{s}_{t-1}|\mathbf{y}_{1:t-1}) \, d\,\mathbf{s}_{t-1}}
\end{equation}
where $\mathbf{y}_{1:t-1} := (\mathbf{y}_1, \mathbf{y}_2, \cdots, \mathbf{y}_{t-1})$.
In the update step, the observation $\mathbf{y}_t$ is available, the state probability can be updated using the Bayes rule
\begin{equation}
  p(\mathbf{s}_t|\mathbf{y}_{1:t}) = \frac{p(\mathbf{y}_t|\mathbf{s}_t)p(\mathbf{s}_t|\mathbf{y}_{1:t-1})}{p(\mathbf{y}_t|\mathbf{y}_{t-1})}
\end{equation}
In the particle filter, the posterior $p(\mathbf{s}_t|\mathbf{y}_{1:t})$ is estimated by sequential importance sampling,
and we select an importance density $q(\mathbf{s}_{1:t}|\mathbf{y}_{1:t})$ such that $p(\mathbf{s}_{1:t},\mathbf{y}_{1:t}) = w_tq(\mathbf{s}_{1:t}|\mathbf{y}_{1:t})$
from which it is easy to draw samples,
where $q(\mathbf{s}_{1:t}|\mathbf{y}_{1:t}) = q(\mathbf{s}_{1:t-1}|\mathbf{y}_{1:t-1})q(\mathbf{s}_t|\mathbf{s}_{1:t-1},\mathbf{y}_t)$.
To generate $n$ independent samples (particles) $\{\mathbf{s}_1^i\}_{i=1}^n \sim q(\mathbf{s}_{1:t}|\mathbf{y}_{1:t})$ at time $t$, we generate $\mathbf{s}_1^i \sim q(\mathbf{s}_1|\mathbf{y}_1)$ at time 1, then $\mathbf{s}_k^i \sim q(\mathbf{s}_k|\mathbf{s}_{1:k}^i,\mathbf{y}_k)$ at time $k$, for $k = 2,\cdots, t$.
The weight of the particle $\mathbf{s}_t^i$ at time $t$, is updated as
\begin{equation}
  w_t^i = w_{t-1}^i\frac{p(\mathbf{y}_t|\mathbf{s}_t^i)p(\mathbf{s}_t^i|\mathbf{s}_{t-1}^i)}{q(\mathbf{s}_t^i|\mathbf{s}_{1:t-1}^i,\mathbf{y}_t)}
\end{equation}
At each time step, the particles are resampled according to their importance weights to generate new equally weighted particles. In order to minimize the variance of the importance weights at time $t$, the importance density is selected according to
$q(\mathbf{s}_t|\mathbf{s}_{1:t-1},\mathbf{y}_t) = p(\mathbf{s}_t|\mathbf{s}_{t-1},\mathbf{y}_t)$.

An affine motion model between consecutive frame is assumed in particle filters for visual tacking, as introduced in \cite{mei2011robust}.
That is, the state variable $\mathbf{s}_t$ is defined as a vector that consists of six parameters of the affine transformation, transforming the bounding box within each image frame to get an image patch of the target.
The state transition $p(\mathbf{s}_t|\mathbf{s}_{t-1})$ is defined as a multivariate Gaussian distribution with a different standard deviation for each affine parameter.
Since the velocity of the tracking target is unknown and can change during tracking,
it is modeled within the variance of the position parameters in the state transition.
In this way, the tracking techniques based on particle filters need a variety of state parameters,
which requires a large amount of particles to represent this distribution.
The observation $\mathbf{y}_t$ encodes the cropped region of interest by applying the affine transformation.
In practice, $\mathbf{y}_t$ is represented by the normalized features extracted from the region of interest.

\section{TRAC-Based Sparse Tracking}

\subsection{Sparse Tracking}

Under the tracking framework based on particle filtering,
the particles are randomly sampled around the current state of the target object according to $p(\mathbf{s}_t|\mathbf{s}_{t-1})$.
At time $t$, we consider $n$ particle samples $\{\mathbf{s}_t^i\}_{i=1}^n$,
which are sampled from the state of the previous resampled particles in time $t-1$,
 according to the predefined multivariate Gaussian distribution $p(\mathbf{s}_t|\mathbf{s}_{t-1})$.
The observations of these particles (\emph{i.e.}, the image features of the particles) in the $t$-th frame are denoted as $\mathbf{X} = [\mathbf{x}_1,\mathbf{x}_2,\cdots,\mathbf{x}_n] \in \Re^{d\times n}$,
where $\mathbf{x}_i$ represents the image features of the particle $\mathbf{s}_t^i$, and $d$ is the dimension of the feature.
In the noiseless case, each $\mathbf{x}_i$ approximately lies in a linear span of low-dimensional subspace,
which is encoded as a dictionary
$\mathbf{D} = [\mathbf{d}_1, \mathbf{d}_2, \cdots, \mathbf{d}_m] \in \Re^{d\times m}$ containing $m$  templates of the target,
such that $\mathbf{X} = \mathbf{D}\mathbf{Z}$,
where $\mathbf{Z} \in \Re^{m\times n}$ is a weight matrix of $\mathbf{X}$ with respect to $\mathbf{D}$.

When targets are partially occluded or corrupted by noise,
the negative effect can be modeled as sparse additive noise that can take a large value anywhere \cite{mei2011robust}.
To address this issue, the dictionary is augmented with trivial templates $\mathbf{I}_d = [\mathbf{i}_1, \mathbf{i}_2, \cdots, \mathbf{i}_d] \in \Re^{d\times d}$, where a trivial template $\mathbf{i}_i \in \Re^d$ is a vector with only one nonzero entry that can capture occlusion and pixel corruption at the $i$-th location:
\begin{equation}\label{Eq:model}
  \mathbf{X} = \left[\begin{array}{cc}
        \mathbf{D} & \mathbf{I}_d
      \end{array}\right]
  \left[ \begin{array}{c}
               \mathbf{Z} \\
               \mathbf{E}
             \end{array} \right]
   = \mathbf{BW}
\end{equation}
Because the particles $\{\mathbf{s}_t\}_{i=1}^n$ are represented by the corresponding image features $\{\mathbf{x}\}_{i=1}^n$, the observation probability $p(\mathbf{y}_t|\mathbf{s}_t^i)$ becomes $p(\mathbf{y}_t|\mathbf{x}_i)$,
which reflects the similarity between a particle and the templates.
The probability $p(\mathbf{y}_t|\mathbf{x}_i)$ is inversely proportional to the reconstruction error obtained by this linear representation.
\begin{equation}
\label{Eq:obs prob}
p(\mathbf{y}_t|\mathbf{s}_t^i) = \exp(-\gamma \|\mathbf{x}_i-\hat{\mathbf{x}}_i\|_2^2)
\end{equation}
where $\gamma$ is a predefined parameter and $\hat{\mathbf{x}}_i$ is the value of the particle representation predicted by Eq. \eqref{Eq:model}.
Then, the particle with the highest probability is selected as the object target at time $t$.

To integrate multimodal features in multi-task sparse tracking,
$n$ particles are jointly considered in estimating $\mathbf{W}$, 
and each particle has $K$ modalities of features. 
When multimodal features are applied,
the particle representation $\mathbf{X}$ can be denoted as $\mathbf{X} = \left[\mathbf{X}^1,\mathbf{X}^2,\cdots, \mathbf{X}^K\right]^{\top}$.
For each modality, the particle observation matrix $\mathbf{X}^k \in \Re^{d_k\times n}$ has $n$ columns
of normalized feature vectors for $n$ particles,
and $d_k$ is the dimensionality of the $k$-th modality such that $\sum_{k=1}^Kd_k = d$.
Then, the dictionary of the $k$-th modality is $\mathbf{B}^k = \left[ \mathbf{D}^k, \mathbf{I}_{d_k}\right]$,
thus Eq. \eqref{Eq:model} becomes
$\mathbf{X}_k = \mathbf{B}_k\mathbf{W}_k$.
The resulted representation coefficient matrix is a combination of all modality coefficients $\mathbf{W} = [\mathbf{W}^1, \mathbf{W}^2, \cdots, \mathbf{W}^K] \!\in\! \Re^{m\times (n\times K)}\!$.
In the multimodal sparse tracking framework, $\mathbf{W}$ is computed by:
\begin{eqnarray}\label{Eq:objOrigin}
  \min\limits_{\mathbf{W}} \sum_{k=1}^K \| \mathbf{B}^k \mathbf{W}^k - \mathbf{\mathbf{X}}^k \|_F^2
  + \lambda \| \mathbf{W} \|_{2,1}
\end{eqnarray}
where $\lambda$ is the trade-off parameter,
and the $\ell_{2,1}$ norm is denoted by $\| \mathbf{W} \|_{2,1} = \sum_i(\sqrt{\sum_jw_{i,j}^2})$
(with $w_{i,j}$ representing the element of the $i$-th row and $j$-th column in $\mathbf{W}$),
which enforces an $\ell_2$ norm on each row and an $\ell_1$ norm among rows, which introduces sparsity of the target templates.

\subsection{Temporal Consistency}

To robustly track deformable or changing objects in cluttered environments and address tracker drifting,
it is important to model the consistency of target appearances during a history of recent image frames.
While particle filters model the time propagation of each individual particle,
it cannot model the consistency of multiple particles.
In visual tracking, particles selected as the tracking results in multiple times are typically different
(especially when severe appearance change occurs),
which is critical but cannot be addressed by particle filters.
This shows, although the idea of temporal consistency is intuitive,
the solution is not obvious and heuristic.
In our TRAC algorithm, we propose a novel sparsity regularization to enforce temporal consistency.
Because the observation probability $p(\mathbf{y}_t|\mathbf{s}_t^i)$ is inversely proportional to the model error in Eq. \eqref{Eq:obs prob},
we enforce selecting the particles that are consistent with recently tracking results by applying temporal consistency in the objective function in Eq. \eqref{Eq:objOrigin}.

We denote $\mathbf{W}_t$ as the coefficient matrix of all particles with respect to $\mathbf{B}_t$
in the $t$-th frame, 
$\mathbf{w}_{t-l}$ is the coefficient vector of the tracking result
 (i.e., the selected particle encoding the target object) in the $(t-l)$-th frame with respect to $\mathbf{B}_t$,
and $\mathbf{W}_{t-l} = \mathbf{w}_{t-l}\mathbf{1}_n$ denotes the coefficient matrix for the target
with the same rank as $\mathbf{w}_{t-l}$.
Based on the insight that  a target object usually
has a higher similarity to a more recent tracking result
and this similarity decreases over time,
we employ a time decay factor to model the temporal correlation.
Then, the temporal consistency can be modeled using an autoregressive model as:
$\sum_{l=1}^T\alpha^l\| \mathbf{W}_t -\mathbf{W}_{t-l}\|_{2,1}$,
where $\alpha$ is the time decay parameter.
Thus, our multimodal sparse tracking task at time $t$ is formulated as:
\begin{eqnarray}\label{Eq:obj}
    & \min\limits_{\mathbf{W}_t} &\sum_{k=1}^K \| \mathbf{B}_t^k \mathbf{W}_t^k - \mathbf{\mathbf{X}}_t^k \|_F^2
  + \lambda_1 \| \mathbf{W}_t \|_{2,1} \nonumber \\
  && + \lambda_2 \sum_{l=1}^T \alpha^l  \|\mathbf{W}_t -\mathbf{W}_{t-l}\|_{2,1}
\end{eqnarray}
and $\mathbf{W}_{t-l}$ is computed by:
\begin{eqnarray}
\min\limits_{\mathbf{W}_{t-l}} & \sum_{k=1}^K \| \mathbf{B}_t^k \mathbf{W}_{t-l}^k - \mathbf{\mathbf{X}}_{t-l}^k \|_F^2
  + \lambda_1 \| \mathbf{W}_{t-l} \|_{2,1} \nonumber
\end{eqnarray}
The $i$-th row of the coefficient difference matrix $\mathbf{W}_t-\mathbf{W}_{t-l}$  in Eq. (\ref{Eq:obj})
denotes
the weight differences of the $i$-th template between the target in the $t$-th frame
and the previous tracking result in the $(t-l)$-th frame.
The $\ell_{2,1}$ norm of the coefficient difference $\|\mathbf{W}_t-\mathbf{W}_{t-l}\|_{2,1}$
enforces a small number of rows to have non-zero values,
i.e., only a small set of the templates can be different to represent the targets in frames $t$ and $t-l$.
In other words,
this regularization term encourages the target appearance in the current frame to be similar to the previous tracking results.
Thus, using this regularization,
the particles with appearances that are similar to the recently tracking results can be better modeled,
and the corresponding observation probability $p(\mathbf{y}_t|\mathbf{s}_t^i)$ is higher.
The particle with the highest observation probability in Eq. \eqref{Eq:obj} is then chosen as the tracking result.
When templates are updated (Sec. \ref{sec:sub:update}),
the coefficient matrices $\{\mathbf{W}_{t-l}\} (l=1,\dots,T)$ need to be recalculated.
If the tracking result in the frame $t-l$ is included in the current dictionary,
we don't use its coefficient to enforce consistency,
to avoid overfitting  (i.e., the dictionary can perfectly encode the tracking result at $t-l$ with no errors).



\subsection{Adaptive Template Update} \label{sec:sub:update}


The target appearance usually changes over time;
thus fixed templates typically cause the tracking drift problem.
To model the appearance variation of the target, the dictionary needs to be updated.
Previous techniques \cite{mei2011robust} for template update assign each template an importance weight
to prefer frequently used templates, 
and replace the template with the smallest weight by the current tracking result
if it is different form the highest weighted template.
However, these methods suffer from two key issues.
First, the update scheme does not consider the representability of these templates,
but only rely on their frequency of being used.
Thus, similar templates are usually included in the dictionary,
which decreases the discrimination power of the templates.
Second, previous update techniques are not adaptive;
they update the templates with the same frequency without modeling the target's changing speed.
Consequently, they are incapable of capturing the insight that when the target's appearance changes faster,
the templates must be updated more frequently, and vise versa.



To address these issues, we propose a novel adaptive template update scheme
that allows our TRAC algorithm to adaptively select target templates,
based on their representativeness and importance,
according to the degree of appearance changes during tracking.
When updating templates,
we consider their long-term-short-term representativeness.
The observation of recent tracking results are represented by
$\mathbf{Y} = \left[\mathbf{y}_t, \mathbf{y}_{t-1}, \cdots, \mathbf{y}_{t-(l-1)} \right] \in \Re^{d\times l}$,
where $\mathbf{y}_t$ is the observation (\emph{i.e.}, feature vector) of the particle chosen as the tracking target at time $t$,
which is used as the template candidate to update the dictionary $\mathbf{D} \in \Re^{d\times m}$.
Then, the objective is to select $r$ $(r < l, r < m) $ templates that are most representative in short-term from the recent tracking results,
which can be formulated to solve:
%
\begin{equation}\label{Eq:template}
  \min\limits_{\mathbf{U}}\|\mathbf{Y}-\mathbf{YU}\|_F^2 + \lambda_3 \| \mathbf{U}\|_{2,1}
\end{equation}
where $\mathbf{U}=[\mathbf{u}_1, \mathbf{u}_2, \cdots, \mathbf{u}_l] \in \Re^{l\times l}$,
and $\mathbf{u}_i$ is the weight of the template candidates to represent the  $i$-th candidate in $\mathbf{Y}$.
The $\ell_{2,1}$ norm enforces sparsity among the candidates,
which enables to select a small set of representative candidates.
After solving Eq. (\ref{Eq:template}),
we can sort the rows $\mathbf{U}^i$ ($i=1,\dots,l$) by the row-sum values of the absolute $\mathbf{U}$ in the
decreasing order,
resulting in a row-sorted matrix $\mathbf{U}'$.
A key contribution of our TRAC algorithm
is its capability to \emph{adaptively} select a number of templates,
which varies according to the degree of the target's appearance variation.
Given $\mathbf{U}'$, our algorithm determines the minimum $r$ value that satisfies $\frac{1}{l}\sum_{i=1}^{r} \|\mathbf{U}_i' \|_1 \geq \gamma$,
and selects the $r$ template candidates corresponding to the top $r$ rows of $\mathbf{U}'$,
where $\gamma$ is a threshold encoding our expect of the overall representativeness of the selected candidates
(e.g., $\gamma = 0.75$).
When the target's appearance remains the same in the recent tracking results,
one candidate will obtain a high row-sum value (while others have a value close to 0,
 due to the $\ell_{2,1}$ norm),
which will be selected as the single candidate.
On the other hand, when the target's appearance significantly changes,
since no single candidate can well represent others,
the rows of $\mathbf{U}$ will become less sparse and a set of candidates
can have a high row-sum value.
So, multiple candidates in the top rows of $\mathbf{U}'$ will be selected.
Therefore, our TRAC method is able to adaptively select a varying number of template candidates
based on their short-term representability, according to the degree of the target's appearance changes.

To update the dictionary $\mathbf{D}$,
the adaptively selected $r$ candidates are added to $\mathbf{D}$,
while the same number of templates must be removed from $\mathbf{D}$.
To select the templates to remove,
we compute the representativeness weight of the templates in $\mathbf{D}$,
using the same formulation in Eq. (\ref{Eq:template}).
Since the dictionary incorporates template information from the beginning of tracking,
we call the weight the long-term representativeness.
Then, the templates to remove from $\mathbf{D}$ are selected according to a combined weight:
\begin{equation}
  \mathbf{w} = \beta \mathbf{w}_{rep} + (1-\beta) \mathbf{w}_{imp}
\end{equation}
where
$\mathbf{w}_{rep}$ denotes the normalized long-term representativeness weight,
$\mathbf{w}_{imp}$ denotes the traditional normalized importance weight,
and $\beta$ is a trade-off parameter.
The $r$ templates in $\mathbf{D}$ with the minimum weights are removed.


\subsection{Optimization Algorithm}

Although the optimization problems in Eqs. (\ref{Eq:obj}) and (\ref{Eq:template}) are convex,
since their objective function contains non-smooth terms,
they are still challenging to solve.
We introduce a new efficient algorithm to solve both problems,
and provide a theoretical analysis to prove that the algorithm converges to the global optimal solution.
Since Eq. (\ref{Eq:template}) is a special case of Eq. (\ref{Eq:obj}) when $\lambda_2 = 0$,
we derive the solution according to the notation used in Eq. (\ref{Eq:obj}).
For a given matrix $\mathbf{W} = [w_{i,j}]$, we represent its $i$th row as $\mathbf{w}^i$ and the $j$th column as $\mathbf{w}_j$.
Given $\mathbf{W}_t^k = [\mathbf{w}_{t1}^k, \mathbf{w}_{t2}^k, \cdots, \mathbf{w}_{tn}^k]$, taking the derivative of the objective with respect to $\mathbf{W}_t^k (1\leq k \leq K)$, and setting it to zero, we obtain
\begin{eqnarray}
  (\mathbf{B}_t^k)^\top \mathbf{B}_t^k \mathbf{W}_t^k - (\mathbf{B}_t^k)^\top \mathbf{X}_t^k + \lambda_1\tilde{\mathbf{D}}\mathbf{W}_t^k   \nonumber\\
  + \lambda_2\sum_{l=1}^T \alpha^l \mathbf{D}^l(\mathbf{W}_t^k - \mathbf{W}_{t-l}^k) = 0
\end{eqnarray}
where $\mathbf{W}_{t-l}^k$ is the coefficient of the $k$th view in the tracking result at time $t-l$,
$\tilde{\mathbf{D}}$ is a diagonal matrix with the $i$th diagonal element as $\frac{1}{2\|\mathbf{w}_t^i\|_2}$,
and $\mathbf{D}^l$ is a diagonal matrix with the $i$th diagonal matrix as $\frac{1}{2\|\mathbf{w}_t^i-\mathbf{w}_{t-l}^i \|_2}$. Thus we have:
\begin{eqnarray}\label{Eq:updateW}
  \mathbf{W}_t^k &=& \left( (\mathbf{B}_t^k)^\top \mathbf{B}_t^k + \lambda_1\tilde{\mathbf{D}} + \lambda_2\sum_{l=1}^T\alpha^lD^l \right)^{-1}    \nonumber\\
  && \cdot \left( (\mathbf{B}_t^k)^\top \mathbf{X}_t^k + \lambda_2\sum_{l=1}^T\alpha^l\mathbf{D}^l\mathbf{W}_{t-l}^k \right)
\end{eqnarray}
Note that $\tilde{\mathbf{D}}$ and $\mathbf{D}^l(1\leq l\leq T)$ are dependent on $\mathbf{W}_t$ and thus are also unknown variables. We propose an iterative algorithm to solve this problem described in Algorithm 1.

\textbf{Convergence analysis.} The following theorem guarantees the convergence of Algorithm \ref{alg:optimization}.

\newtheorem{theorem}{Theorem}
\begin{theorem}
Algorithm \ref{alg:optimization} decreases the objective value of Eq. (\ref{Eq:obj}) in each iteration.
\end{theorem}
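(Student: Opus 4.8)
The plan is to view Algorithm~\ref{alg:optimization} as an iteratively reweighted least squares scheme and to exhibit an auxiliary (majorizing) function that is exactly minimized by the closed-form update in Eq.~\eqref{Eq:updateW}. Write $J(\mathbf{W})$ for the objective in Eq.~\eqref{Eq:obj}, let $\mathbf{W}_t$ be the current iterate, let $\tilde{\mathbf{D}}$ and $\{\mathbf{D}^l\}_{l=1}^T$ be the diagonal matrices built from $\mathbf{W}_t$ as in the text, and let $\hat{\mathbf{W}}_t$ be the next iterate produced by Eq.~\eqref{Eq:updateW}. Define
\begin{equation*}
G(\mathbf{W}) = \sum_{k=1}^K \|\mathbf{B}_t^k \mathbf{W}^k - \mathbf{X}_t^k\|_F^2 + \lambda_1 \Tr(\mathbf{W}^\top \tilde{\mathbf{D}} \mathbf{W}) + \lambda_2 \sum_{l=1}^T \alpha^l \Tr\!\big( (\mathbf{W} - \mathbf{W}_{t-l})^\top \mathbf{D}^l (\mathbf{W} - \mathbf{W}_{t-l}) \big),
\end{equation*}
where each trace is read modality-wise, e.g.\ $\Tr(\mathbf{W}^\top\tilde{\mathbf{D}}\mathbf{W}) := \sum_{k}\Tr((\mathbf{W}^k)^\top \tilde{\mathbf{D}} \mathbf{W}^k) = \sum_i \|\mathbf{w}^i\|_2^2/(2\|\mathbf{w}_t^i\|_2)$. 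The first thing I would check is that $G$ is a convex quadratic in each block $\mathbf{W}^k$ and that setting $\partial G/\partial \mathbf{W}^k = 0$ reproduces exactly the stationarity equation preceding Eq.~\eqref{Eq:updateW}; since the inverted matrix there is positive definite ($(\mathbf{B}_t^k)^\top\mathbf{B}_t^k \succeq 0$ plus a positive diagonal), Eq.~\eqref{Eq:updateW} gives the unique global minimizer of $G$, hence $G(\hat{\mathbf{W}}_t) \le G(\mathbf{W}_t)$.

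The second ingredient is the elementary inequality: for vectors $\mathbf{a}$ and $\mathbf{b} \ne \mathbf{0}$, expanding $(\|\mathbf{a}\|_2 - \|\mathbf{b}\|_2)^2 \ge 0$ gives $\|\mathbf{a}\|_2 - \|\mathbf{a}\|_2^2/(2\|\mathbf{b}\|_2) \le \|\mathbf{b}\|_2 - \|\mathbf{b}\|_2^2/(2\|\mathbf{b}\|_2)$. Applying it row-by-row with $(\mathbf{a},\mathbf{b}) = (\hat{\mathbf{w}}^i, \mathbf{w}_t^i)$ and summing over rows yields
\begin{equation*}
\lambda_1\|\hat{\mathbf{W}}_t\|_{2,1} - \lambda_1\Tr(\hat{\mathbf{W}}_t^\top \tilde{\mathbf{D}} \hat{\mathbf{W}}_t) \;\le\; \lambda_1\|\mathbf{W}_t\|_{2,1} - \lambda_1\Tr(\mathbf{W}_t^\top \tilde{\mathbf{D}} \mathbf{W}_t),
\end{equation*}
and, applying it with $(\mathbf{a},\mathbf{b}) = (\hat{\mathbf{w}}^i - \mathbf{w}_{t-l}^i,\ \mathbf{w}_t^i - \mathbf{w}_{t-l}^i)$, the analogous inequality for each $l$ with weight $\lambda_2\alpha^l$ and matrix $\mathbf{D}^l$.

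To conclude, note that $J(\hat{\mathbf{W}}_t)$ equals $G(\hat{\mathbf{W}}_t)$ plus the two ``difference'' terms $\lambda_1\big(\|\hat{\mathbf{W}}_t\|_{2,1} - \Tr(\hat{\mathbf{W}}_t^\top\tilde{\mathbf{D}}\hat{\mathbf{W}}_t)\big)$ and $\lambda_2\sum_l\alpha^l\big(\|\hat{\mathbf{W}}_t-\mathbf{W}_{t-l}\|_{2,1} - \Tr((\hat{\mathbf{W}}_t-\mathbf{W}_{t-l})^\top\mathbf{D}^l(\hat{\mathbf{W}}_t-\mathbf{W}_{t-l}))\big)$. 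The inequalities of the previous paragraph bound these difference terms by their counterparts evaluated at $\mathbf{W}_t$, while $G(\hat{\mathbf{W}}_t) \le G(\mathbf{W}_t)$ from the minimization step; adding the three bounds gives $J(\hat{\mathbf{W}}_t) \le J(\mathbf{W}_t)$, which is the claim, with strict decrease unless $\hat{\mathbf{W}}_t = \mathbf{W}_t$ is a fixed point. Combined with convexity and lower-boundedness of $J$, this monotonicity then yields convergence to the global optimum.

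I expect the only real obstacle to be the degenerate case in which some row $\mathbf{w}_t^i$, or some difference row $\mathbf{w}_t^i - \mathbf{w}_{t-l}^i$, vanishes, so that $\tilde{\mathbf{D}}$ or $\mathbf{D}^l$ is ill-defined. The standard remedy is to regularize the reweighting as $1/(2\sqrt{\|\mathbf{w}_t^i\|_2^2 + \varepsilon})$, run the same argument on the smoothed objective with $\|\mathbf{w}^i\|_2$ replaced by $\sqrt{\|\mathbf{w}^i\|_2^2+\varepsilon}$ (concavity of $t\mapsto\sqrt{t+\varepsilon}$ makes the linearization a valid majorizer), obtain monotone decrease of the smoothed objective, and then let $\varepsilon\to 0$. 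The remaining steps --- the gradient computation identifying Eq.~\eqref{Eq:updateW} as $\arg\min G$ and the positive-definiteness of the inverted matrix --- are routine.
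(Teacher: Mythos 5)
Your proposal is correct and follows essentially the same route as the paper: the reweighted quadratic surrogate minimized in closed form by Eq.~(\ref{Eq:updateW}), the row-wise inequality $\|\mathbf{a}\|_2 - \|\mathbf{a}\|_2^2/(2\|\mathbf{b}\|_2) \le \|\mathbf{b}\|_2/2$ (the paper's $f(x)=x-\tfrac{x^2}{2\alpha}$ lemma in disguise), and summation of the resulting bounds, followed by the convexity argument for global optimality. Your explicit $\varepsilon$-smoothing treatment of vanishing rows is a small but genuine improvement over the paper, which leaves that degenerate case unaddressed.
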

\begin{proof}
\small
In each iteration of Algorithm \ref{alg:optimization}, according to Step 3 to 5, we know that
\begin{eqnarray*}
  (\mathbf{W}_t)_{s+1}\!\!\!\!  &=&\!\!\!\! \min\limits_{\mathbf{W}_t} \sum_{k=1}^K \| \mathbf{B}_t^k\mathbf{W}_t^k - \mathbf{X}_t^k \|_F^2 + \lambda_1\Tr{\mathbf{W}_t^\top\tilde{\mathbf{D}}_{s+1}\mathbf{W}_t}  \nonumber\\
    &+& \!\!\!\!  \lambda_2\sum_{l=1}^T\Tr{(\mathbf{W}_t-\mathbf{W}_{t-l})^\top \mathbf{D}_{s+1}^l (\mathbf{W}_t-\mathbf{W}_{t-l})}
\end{eqnarray*}
Thus, we can derive:
\begin{eqnarray*}
 \sum_{k=1}^K \| \mathbf{B}_t^k(\mathbf{W}_t^k)_{s+1} - \mathbf{X}_t^k \|_F^2 + \lambda_1\Tr{(\mathbf{W}_t)_{s+1}^\top\tilde{\mathbf{D}}_{s+1}(\mathbf{W}_t)_{s+1}}  \nonumber\\
+ \lambda_2 \sum_{l=1}^T \alpha^l \Tr{\left( (\mathbf{W}_t)_{s+1}-\mathbf{W}_{t-l}\right)^\top \mathbf{D}^l_{s+1} \left( (\mathbf{W}_t)_{s+1}-\mathbf{W}_{t-l} \right)}  \nonumber\\
\leq \sum_{k=1}^K \| \mathbf{B}_t^k(\mathbf{W}_t^k)_{s} - \mathbf{X}_t^k \|_F^2 + \lambda_1\Tr{(\mathbf{W}_t)_{s}^\top\tilde{\mathbf{D}}_{s+1}(\mathbf{W}_t)_{s}}  \nonumber\\
+ \lambda_2 \sum_{l=1}^T\alpha^l \Tr{( (\mathbf{W}_t)_{s}-\mathbf{W}_{t-l})^\top \mathbf{D}^l_{s+1} ( (\mathbf{W}_t)_{s}-\mathbf{W}_{t-l})}
\end{eqnarray*}
Substituting $\tilde{\mathbf{D}}$ and $\mathbf{D}^l$ by definitions, we obtain:
\begin{eqnarray}\label{Eq:less1}
\mathcal{L}_{s+1} + \lambda_1\sum_{i=1}^m\frac{\|(\mathbf{w}_t^i)_{s+1}\|_2^2}{2\|(\mathbf{w}_t^i)_s\|_2} +
   \lambda_2\sum_{l=1}^T \alpha^l\sum_{i=1}^m\frac{\| (\mathbf{w}_t^i)_{s+1}-\mathbf{w}_{t-l}^i \|_2^2}{2\| (\mathbf{w}_t^i)_s - \mathbf{w}_{t-l}^i \|_2}    \nonumber\\
\leq \mathcal{L}_s + \lambda_1\sum_{i=1}^m\frac{\|(\mathbf{w}_t^i)_s\|_2^2}{2\|(\mathbf{w}_t^i)_s\|_2} +
   \lambda_2\sum_{l=1}^T \alpha^l\sum_{i=1}^m\frac{\| (\mathbf{w}_t^i)_s-\mathbf{w}_{t-l}^i \|_2^2}{2\| (\mathbf{w}_t^i)_s - \mathbf{w}_{t-l}^i \|_2} \nonumber
\end{eqnarray}
where $\mathcal{L}_{s} = \sum_{k=1}^K \| \mathbf{B}_t^k(\mathbf{W}_t^k)_{s} - \mathbf{X}_t^k \|_F^2$.
Since it can be easily verified that for the function $f(x) = x-\frac{x^2}{2\alpha}$,
given any $x\neq\alpha \in \Re, f(x)\leq f(\alpha)$ holds, we can derive:
\begin{eqnarray}\label{Eq:less2}
&& \sum_{i=1}^m\| (\mathbf{w}_t^i)_{s+1} \|_2 - \sum_{i=1}^m\frac{\|(\mathbf{w}_t^i)_{s+1}\|_2^2}{2\| (\mathbf{w}_t^i)_s \|_2}     \nonumber\\
&& \leq \sum_{i=1}^m\|(\mathbf{w}_t^i)_s \|_2 - \sum_{i=1}^m\frac{\|(\mathbf{w}_t^i)_{s}\|_2^2}{2\| (\mathbf{w}_t^i)_s \|_2} \nonumber
\end{eqnarray}
and
\begin{eqnarray}\label{Eq:less3}
&& \sum_{i=1}^m\| (\mathbf{w}_t^i)_{s+1}-\mathbf{w}_{t-l}^i \|_2 - \sum_{i=1}^m\frac{\|(\mathbf{w}_t^i)_{s+1}-\mathbf{w}_{t-l}^i\|_2^2}{2\| (\mathbf{w}_t^i)_s-\mathbf{w}_{t-l}^i \|_2} \leq    \nonumber\\
&& \sum_{i=1}^m\|(\mathbf{w}_t^i)_s-\mathbf{w}_{t-l}^i \|_2 - \sum_{i=1}^m\frac{\|(\mathbf{w}_t^i)_{s}-\mathbf{w}_{t-l}^i\|_2^2}{2\| (\mathbf{w}_t^i)_s-\mathbf{w}_{t-l}^i \|_2}
\end{eqnarray}
Adding the previous three equations
on both sides (note Eq. (\ref{Eq:less3}) is repeated for $1\leq l\leq T$), we have
\begin{eqnarray}
\mathcal{L}_{s+1} + \lambda_1\sum_{i=1}^m\|  (\mathbf{w}_t^i)_{s+1}\|_2 + \lambda_2\sum_{l=1}^T\alpha^l\sum_{i=1}^m\| (\mathbf{w}_t^i)_{s+1}-\mathbf{w}_{t-l}^i \|_2   \nonumber\\
\leq \mathcal{L}_s + \lambda_1\sum_{i=1}^m\| (\mathbf{w}_t^i)_{s}\|_2 + \lambda_2\sum_{l=1}^T\alpha^l\sum_{i=1}^m\| (\mathbf{w}_t^i)_{s}-\mathbf{w}_{t-l}^i \|_2 \nonumber
\end{eqnarray}
Therefore, the algorithm decreases the objective value in each iteration.
Since the problem in Eq. (\ref{Eq:obj}) is convex,
the algorithm converges to the global solution.
\end{proof}


\begin{algorithm}[t]

{
\caption{An efficient iterative algorithm to solve the optimization problems in Eqs. (\ref{Eq:obj}) and (\ref{Eq:template}).}
\label{alg:optimization}

    \SetKwInOut{Input}{Input}
    \SetKwInOut{Output}{Output}
    \Input{$\mathbf{B}_t, \mathbf{X}_t$}
        \Output{$\left( \mathbf{W}_t\right)_s \in \Re^{m\times (nK)}$}

    Let $s = 1$. Initial $(\mathbf{W}_t)_s$ by solving $\min\limits_{\mathbf{W}_t} \sum_{k=1}^K \| \mathbf{B}_t^k\mathbf{W}_t^k - \mathbf{X}_t^k \|_F^2$. \\
    \While{not converge} {
    Calculate the diagonal matrix $\tilde{\mathbf{D}}_{s+1}$, where the $i$th diagonal element is $\frac{1}{2\|(\mathbf{w}_t^i)_s\|_2}$.

    Calculate the diagonal matrices $\mathbf{D}^l_{s+1} (1\leq l \leq T)$, where the $i$th diagonal element is $\frac{1}{2\|(\mathbf{w}_t^i)_s-\mathbf{w}_{t-l}^i \|_2}$.

    For each $\mathbf{W}_t^k(1\leq k\leq K)$, calculate $(\mathbf{W}_t^k)_{s+1}$ using Eq. (\ref{Eq:updateW}).

    s = s+1
    }
}

\end{algorithm}

\begin{figure*}[tbh]
    \includegraphics[width=0.995\textwidth]{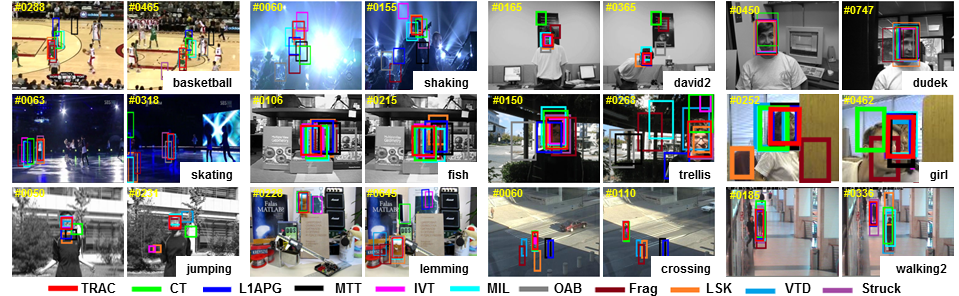}
    \centering
    \caption{Tracking results of 11 trackers (denoted in different colors) on 12 image sequences.
    Frame indices are shown in the top left corner in yellow colors.
    Results are best viewed in color on high-resolution displays.}\label{Fig:qualitative}
\end{figure*}

\section{Experiments}
To evaluate the performance of the proposed TRAC method,
we performed  extensively validation on twelve challenging image sequences that are publicly available
from the widely used
Visual Tracker Benchmark dataset \cite{wu2013online}\footnote{
The Visual Tracker Benchmark: www.visual-tracking.net.}.
The used image sequences contain a variety of target objects under static or dynamic background.
The length of the image sequences is also varied with the shortest under 100 frames and the longest over 1000 frames.
Each frame of the sequence is manually annotated with the corresponding ground-truth bounding box for the tracking target;
the attributes and challenges of each sequence that may affect tracking performance are also provided in the dataset.

Throughout the experiments, we employed the parameter set of $\lambda_1 = 0.5$, $\lambda_2 = 0.1$, $\lambda_3 = 0.5$, $\alpha = 0.1$, $\beta = 0.5$,
$n = 400$, 
and
$m = 10$.  
To represent the tracking targets,
we employed four popular visual features that were widely used in previous sparse tracking methods:
color histograms, intensity, histograms of oriented gradients (HOG), and local binary patterns (LBP).
We compared our TRAC algorithm with ten state-of-the-art methods,
including trackers based on
\underline{(1)} multiple instance learning (MIL) \cite{babenko2009visual},
\underline{(2)} online Adaboost boosting (OAB) \cite{grabner2006real},
\underline{(3)} L1 accelerated proximal gradient tracker (L1APG) \cite{bao2012real},
\underline{(4)} Struck \cite{hare2011struck},
\underline{(5)} circulant structure tracking with kernels (CSK) \cite{henriques2012exploiting},
\underline{(6)} local sparse and K-selection tracking (LSK) \cite{liu2011robust},
\underline{(7)} multi-task tracking (MTT) \cite{zhang2012robust},
\underline{(8)} incremental visual tracking (IVT) \cite{ross2008incremental},
\underline{(9)} fragments-based tracking (Frag) \cite{adam2006robust}, and
\underline{(10)} visual tracking decomposition (VTD) \cite{kwon2010visual}.

\subsection{Qualitative Evaluation}

The qualitative tracking results obtained by our TRAC algorithm is  shown in  Figure \ref{Fig:qualitative}.
We analyze and compare the performance when various challenges are present,
as follows.


\textbf{Occlusion}:
The \textit{walking2} and \textit{girl} sequences track a person body or a human face while occluded by another person.
In the \textit{walking2} sequence, the OAB, Frag, MIL, CT, LSK, and VTD methods fail when the walking woman is occluded by a man.
The Struck method shows more tracking errors from the accurate position.
On the other hand, TRAC, L1APG, MTT, and IVT methods successfully track the target throughout the entire sequence.
The main challenge of the \textit{girl} sequence is occlusion and pose variation.
Frag fails when the girl starts to rotate;
LSK fails when the girl completely turns her back towards the camera.
The IVT method fails around frame 125 when the girl keeps rotating,
and the CT and MIL methods experience significant drift at the same time.
When the man's face occludes the girl,
the VTD method starts to track the men but comes back to the target when the man disappears.
The TRAC, L1APG, MTT, OAB, and Struck methods accurately track the target face in the entire sequence.


\textbf{Background Clutter}:
The \textit{basketball} and \textit{skating1} sequences track a fast moving human among other people,
with significant background clutter, occlusion and deformation.
In the \textit{basketball} sequence,
the TRAC, VTD, and Frag methods track the correct target throughout the entire sequence,
while Frag suffers more errors from the accurate position.
Other trackers fail to track the target at different time frames.
Due to enforcing temporal consistency and adaptively updating templates,
our TRAC method accurately tracks the fast moving human body.
In the \textit{skating1} sequence, the TRAC and VTD methods can track the target most of the time.
The LSK and OAB trackers can keep tracking most of the time
but significantly drift away at the frames where the background is dark.
Struck fails when the target is occluded by another person.
Other trackers fail at earlier time frames due to the target or background motion.



\textbf{Illumination Variation}: The main challenge of the \textit{shaking} and \textit{fish} sequences is illumination change.
In \textit{shaking},
the OAB, CT, IVY, Frag and MTT trackers fail to track the target face in frames around 17, 21, 25, 53, 60, respectively.
Struck cannot track the accurate position most of the time and drift far away.
LSK fails in frame 18 but recovers in frame 59; 
it also suffers tracking drift when the hat occludes the man's face.
In contrast, TRAC and VTD successfully track the target for the whole video.
In the \textit{fish} sequence, OAB and LSK fail in frames 25 and 225, respectively.
L1APG, MTT, Frag, MIL, and VTD track part of the target but gradually drift away.
The TRAC, IVT, Struck, and CT methods accurately track the entire sequence despite large illumination changes,
while CT is less accurate compared to other successful methods.

\begin{figure}[htb]
\vspace{-10pt}
  \subfigure[precision]{
    \begin{minipage}[b]{0.234\textwidth}
      \centering
        \includegraphics[height=1.25in]{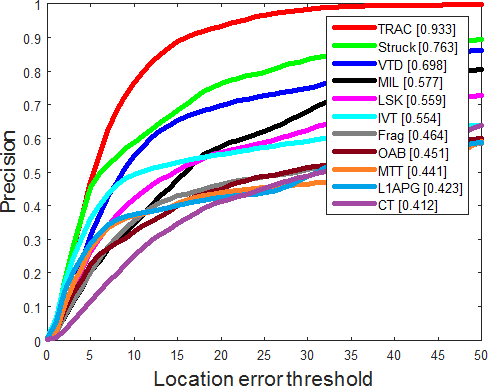}
    \end{minipage}}
  \subfigure[success rate]{
    \begin{minipage}[b]{0.23\textwidth}
      \centering
        \includegraphics[height=1.25in]{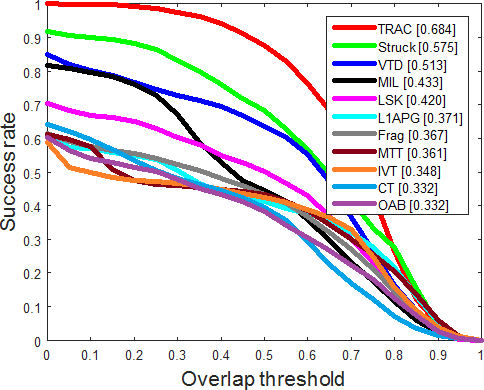}
    \end{minipage}}
  \caption{Overall tracking performance of our TRAC algorithm and comparison with previous state-of-the-art methods.}
  \label{Fig:quantitative_all}
\end{figure}

\begin{figure*}[t]
 \hspace{-3pt}
  \subfigure{
    \begin{minipage}[b]{0.23\textwidth}
      \centering
        \includegraphics[height=1.34in]{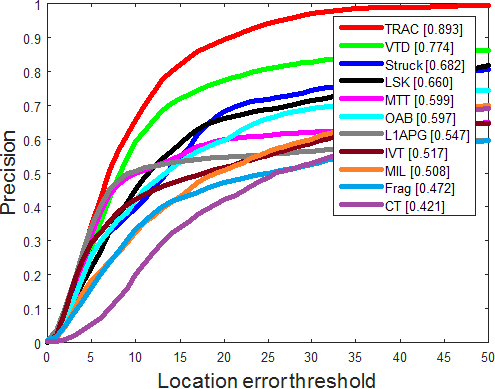}
    \end{minipage}}
  \hspace{3pt}
  \subfigure{
    \begin{minipage}[b]{0.23\textwidth}
      \centering
        \includegraphics[height=1.34in]{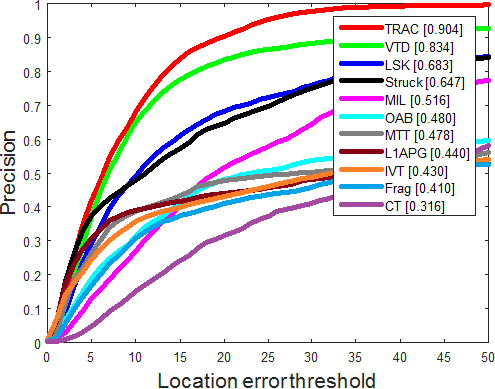}
    \end{minipage}}
  \hspace{3pt}
  \subfigure{
    \begin{minipage}[b]{0.23\textwidth}
      \centering
        \includegraphics[height=1.34in]{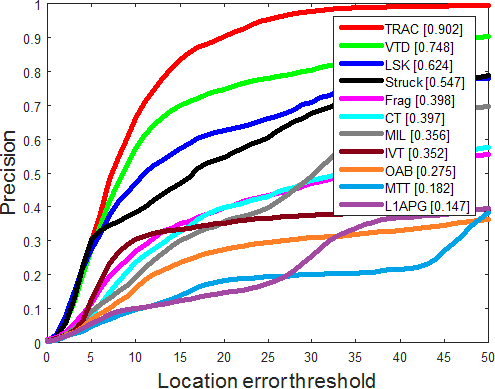}
    \end{minipage}}
  \hspace{3pt}
  \subfigure{
    \begin{minipage}[b]{0.23\textwidth}
      \centering
        \includegraphics[height=1.34in]{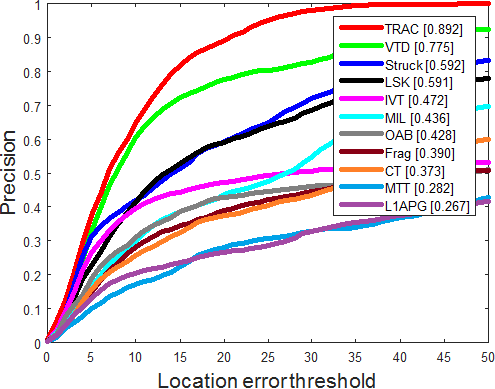}
    \end{minipage}}

  \setcounter{subfigure}{0}
  \vspace{-5pt}
  \hspace{-3pt}
  \subfigure[occlusion]{
    \begin{minipage}[b]{0.23\textwidth}
      \centering
        \includegraphics[height=1.34in]{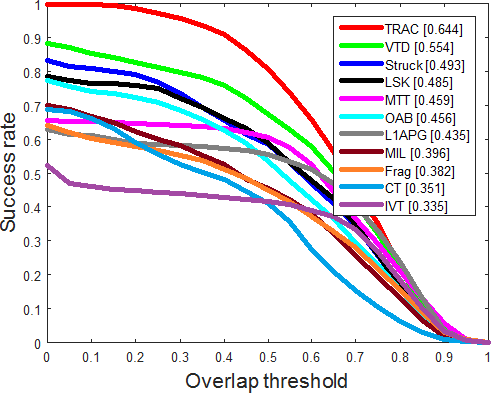}
    \end{minipage}}
  \hspace{3pt}
  \subfigure[rotation]{
    \begin{minipage}[b]{0.23\textwidth}
      \centering
        \includegraphics[height=1.34in]{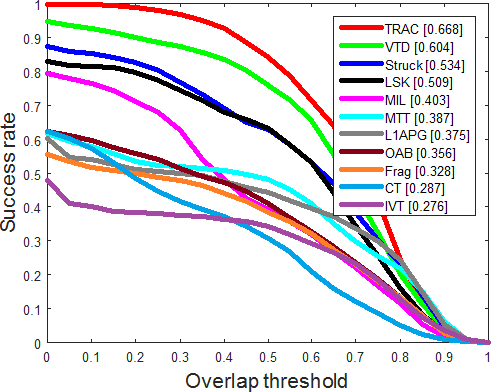}
    \end{minipage}}
  \hspace{3pt}
  \subfigure[illumination variation]{
    \begin{minipage}[b]{0.23\textwidth}
      \centering
        \includegraphics[height=1.34in]{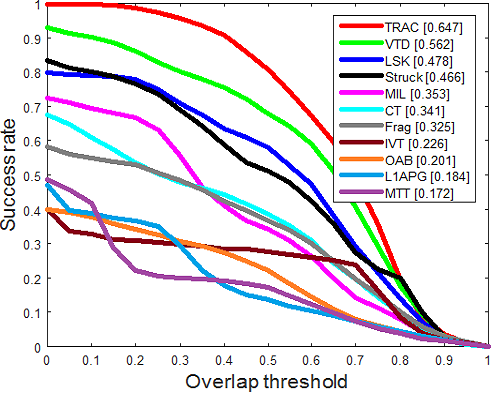}
    \end{minipage}}
  \hspace{3pt}
  \subfigure[background clutter]{
    \begin{minipage}[b]{0.23\textwidth}
      \centering
        \includegraphics[height=1.34in]{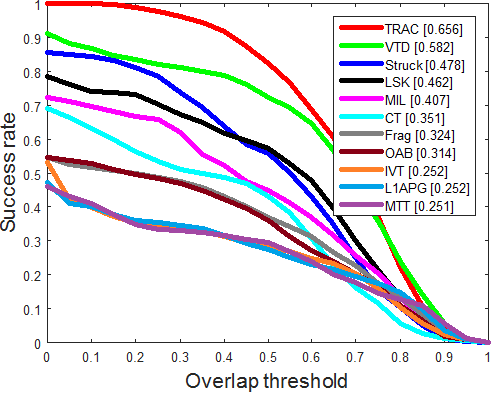}
    \end{minipage}}
  \caption{Precision and success plots evaluated on image sequences with the challenges of (a) occlusion, (b) rotation (including in-plane and out-of-plane rotation), (c) illumination variation, and  (d) background clutter.}
  \label{fig:attributes} 
\end{figure*}


\textbf{Pose Variation}: The \textit{david2}, \textit{dudek}, and \textit{trellis} sequences track human faces
in different situations with significant pose changes.
In \textit{david2},
CT fails at the very beginning; Frag fails around frame 165;
OAB and LSK start to drift at frame 159 and 341, respectively, and then fail.
MIL roughly tracks the target but exhibits significant drifts.
In the \textit{dudek} sequence, occlusion of hands occurs at frame 205,
where the CT, OAB methods start to drift shortly after.
The Frag approach suffers more drifts than other trackers when pose changes,
and fails around frame 906.
The OAB method fails around frame 975, when the target is partially out of view.
 The L1APG method experiences significant drift at frame 1001 and keeps drifting from the accurate position to the end of the sequence.
 In the \textit{trellis} sequence, the OAB, MTT, IVT, Frag, L1APG, MIL, CT, VTD methods fail around frames 115, 192, 210, 212, 239, 240, 321, 332, respectively.
Struck successfully tracks the moving faces with slight tracking drifts.
Our TRAC tracker accurately tracks the moving targets with significant pose changes
in all three videos,
due to its ability to adaptively update templates and enforce temporal consistence.

%

\subsection{Quantitative Evaluation}

We also quantitatively evaluate our TRAC method's performance using the precision and success rate \cite{wu2013online}.
The precision metric is computed using the center location error,
which is the Euclidean distance between the center of the tracked target and the ground truth in each frame.
The plot is generated as the percentage of frames whose center location error is within the given threshold versus the predefined threshold.
The representative precision score is calculated with the threshold set to 20 pixels.
The metric of success rate is used to evaluate the bounding box overlap.
The overlap score is defined as the Jaccard similarity:
Given the tracked bounding box $ROI_T$ and the ground truth bounding box $ROI_G$,
it is calculated by $s = \frac{|ROI_T\bigcap ROI_G|}{|ROI_T\bigcup ROI_G|}$.
The success plot is generated as the ratio of successful frames at the threshold versus the predefined overlap score threshold ranging from 0 to 1.

To quantitatively analyze our algorithm's performance and compare with other methods,
we compute the average frame ratio for the center location error and the bounding box overlap score,
using the 12 image sequences.
The overall performance is demonstrated in Figure \ref{Fig:quantitative_all}.
The results show that our TRAC algorithm achieves the state-of-the-art tracking performance,
and significantly outperforms the previous 10 methods on all image sequences.
To evaluate the robustness of the proposed tracker in different challenging conditions,
we evaluate the performance according to the attributes provided by the image sequences,
including
occlusion, rotation, illumination variation, and background clutter.
As illustrated by the results in Figure \ref{fig:attributes},
our TRAC algorithm performs significantly better than previous methods,
which validates the benefit of enforcing temporal consistency
and adaptively updating target templates.



\section{Conclusion}
In this paper, we introduce a novel sparse tracking algorithm
that is able to model the temporal consistency of the targets
and adaptively update the templates based on their long-term-short-term representability.
By introducing a novel structured norm as a temporal regularization,
our TRAC algorithm can effectively enforce temporal consistency,
thus alleviating the issue of tracking drifting.
The proposed template update strategy considers the long-term-short-term
representability of the target templates
and is capable of selecting an adaptive number of templates, which varies according to the degree of the tracking target's appearance variations.
This strategy makes our approach highly robust to the target's appearance changes
due to occlusion, deformation, and pose changes.
Both abilities are achieved via structured sparsity-inducing norms,
and tracking is performed using particle filters.
To solve the formulated sparse tracking problem,
we implement a new optimization solver that offers a theoretical guarantee to efficiently find the optimal solution.
Extensive empirical studies have been conducted using the Visual Tracker Benchmark dataset.
The qualitative and quantitative results have validated that
our TRAC approach obtains very promising visual tracking performance,
and significantly outperforms the previous state-of-the-art techniques.
The proposed strategies not only address the visual tracking task,
but also can benefit addressing a wide range of problems
involving smooth temporal sequence modeling in artificial intelligence.



%


\bibliographystyle{named}
\bibliography{ijcai16}

\end{document}